\newtheorem{theorem}{Theorem}
\theoremstyle{definition}
\begin{document}
\title[PDE approach to the problem of online prediction]{PDE approach to the problem of online prediction with expert advice: a construction of potential-based strategies}

\author{Dmitry B. Rokhlin}

\address{Institute of Mathematics, Mechanics and Computer Sciences,
              Southern Federal University,
Mil'chakova str., 8a, 344090, Rostov-on-Don, Russia}
\email[Dmitry B. Rokhlin]{rokhlin@math.rsu.ru}


\begin{abstract}
We consider a sequence of repeated prediction games and formally pass to the limit. The supersolutions of the resulting non-linear parabolic partial differential equation are closely related to the potential functions in the sense of N.\,Cesa-Bianci, G.\,Lugosi (2003). Any such supersolution gives an upper bound for forecaster's regret and suggests a potential-based prediction strategy, satisfying the Blackwell condition. A conventional upper bound for the worst-case regret is justified by a simple verification argument. 
\end{abstract}
\subjclass[2010]{68T05, 68W27, 35K55}
\keywords{regret, online learning, potentials, non-linear parabolic PDE, weighted average forecaster}

\maketitle

\section{Introduction}
\label{sec:1}
Let $B$ be any set. In the problem of online prediction with expert advice a forecaster predicts a sequence $(b_t)_{t=0}^{n-1}$, $b_t\in B$ on the basis of expert opinions $f_t^i\in A$, $i=1,\dots,N$, where $A$ is a convex subset of a vector space. More precisely, at round $t\in\{0,\dots,n-1\}$ forecaster's guess $a_t$ is a convex combination of expert advices:
$$a_t=\langle p_t,f_t\rangle:=\sum_{i=1}^N p_t^i f^i_t,\qquad p_t\in\Delta:=\left\{z\ge 0:\langle z,1\rangle=1\right\},$$
based on the available history and current advices: $p_t=p_t((b_s)_{s=0}^{t-1}$, $(f_s)_{s=0}^t)$.

Let $l:A\times B\mapsto [0,1]$ be a loss function. Forecaster's aim is to keep the \emph{regret}
$$ R_n=\sum_{t=0}^{n-1}  l(\langle p_t,f_t\rangle,b_t)-\min_{1\le i\le N}\sum_{t=0}^{n-1} l(f^i_t,b_t) $$
small. This regret $R_n$ measures the quality of predictions by comparing the cumulative loss of the forecaster with that of a best expert, chosen in hindsight. 

We refer to \cite{CesLug06} for more information on this problem. The basic result (see, e.g, \cite[Theorem 2.2]{CesLug06}) guarantees the existence of a prediction strategy $p^*$ achieving the uniform bound
\begin{equation} \label{eq:1}
\frac{R_n\left((p_t^*)_{t=0}^{n-1},(f_t)_{t=0}^{n-1},(b_t)_{t=0}^{n-1}\right)}{\sqrt n}\le C
\end{equation}
for any $(b_t)_{t=0}^{n-1}$, $(f_t)_{t=0}^{n-1}$ under the assumption that $l$ is convex in its first argument.
Moreover, this bound cannot be improved without further assumptions: \cite[Theorem 3.7]{CesLug06}. The inequality (\ref{eq:1}) implies that in the long run on average the forecaster predicts as well as a best expert: $R_n/n\to 0$, $n\to\infty$. 

There are plenty of strategies achieving the bound (\ref{eq:1}). In \cite{CesLug03} it was shown that for a rather general class of online learning problems the construction of such strategies can be based on the notion of potential function. More recently \cite{RakShaSri12} proposed a systematic way for the construction of potentials in the case of randomized prediction, mentioning that ``The origin/recipe for ``good'' potential functions has always been a mystery (at least to the authors).'' The authors of \cite{RakShaSri12} considered a recurrence relation, for the value function of a repeated game, determining the optimal regret, and showed that potential functions are related to relaxations of this function, which are consistent with the mentioned recurrence relation. To obtain such relaxations they used upper bounds, developed in the theory of online learning and capturing the complexity of the problem.

In this paper we show that for the problem of prediction with expert advice there is another ``natural'' way for the appearance of potential-based algorithms. As in \cite{RakShaSri12}, we consider a repeated game, determining the optimal regret, and the correspondent recurrence relation for the value functions $v^n$. Further, in contrast to \cite{RakShaSri12}, we simply pass to the limit as $n\to\infty$ and get a non-linear parabolic Bellman-Isaacs type partial differential equation in $[0,1]\times\mathbb R^N$. A rigorous justification of this procedure can be performed within the theory of viscosity solutions. However, being interested only in the construction of prediction strategies, we need not do it! As usual, a Bellman-type equation at least formally produces optimal strategies. More precisely, we consider the strategies, generated by appropriate smooth supersolutions, and then directly check the inequality (\ref{eq:1}), using the argumentation similar to that of the verification method from the theory of optimal control.  

The described approach is mainly inspired by the paper \cite{KohSer10}, where there was studied a link between fully non-linear second order (parabolic and elliptic) PDE and repeated games. Its application to the problems of online learning theory was initiated in \cite{Rok17}, where an asymptotics of the sequential  Rademacher complexity (the last notion was introduced in \cite{RakSriTew10}) of a finite function class was related to the viscosity solution of a $G$-heat equation. 
In turn, the result of \cite{Rok17} is based on the central limit theorem under model uncertainty, studied within the same approach in \cite{Rok15a}. 

\section{Prediction game and the limiting PDE}
\label{sec:2}
The worst-case regret
$$ \mathscr R_n=\sup_{f_0\in A^N}\inf_{p_0\in\Delta}\sup_{b_0\in B}\dots\sup_{f_{n-1}\in A^N}\inf_{p_{n-1}\in\Delta}\sup_{b_{n-1}\in B} R_n((p_t)_{t=0}^{n-1},(f_t)_{t=0}^{n-1},(b_t)_{t=0}^{n-1})$$
is a result of the repeated game between the predictor, an adversary and experts. In this game the adversary has an informational advantage over the predictor and experts, since $b_t$ is chosen after the sequences $(p_j)_{j=0}^t$, $(f_j)_{j=0}^t$ are revealed. Furthermore, the predictor has an informational advantage over the experts, since the choice of $p_t$ can be based on $(f_j)_{j=0}^t$, $(b_j)_{j=0}^{t-1}$. Finally, experts can use only the information contained in $(p_j)_{j=0}^{t-1}$, $(b_j)_{j=0}^{t-1}$. The adversary and experts play against the predictor, trying to maximize his regret.

To get a recurrent formula for $\mathscr R_n$ let us introduce the family of state processes
\begin{align} \label{eq:1A}
X_{s+1}^{i,t,x,p,f,b} &=X_s^{i,t,x,p,f,b}+\frac{1}{\sqrt n} r^i(p_s,f_s,b_s),\quad s=t,\dots,n-1,\\
 r^i &=l(\langle p_t,f_t\rangle,b_t)-l(f^i_t,b_t),\quad X_t^{i,t,x,p,f,b}=x^i\in\mathbb R.\nonumber
\end{align} 
Summing up the increments $X_{s+1}^{i,0,0,p,f,b}-X_s^{i,0,0,p,f,b}$, we obtain
$$ \frac{1}{\sqrt n}R_n((p_t)_{t=0}^{n-1},(f_t)_{t=0}^{n-1},(b_t)_{t=0}^{n-1})=\max\left\{X_n^{1,0,0,p,f,b},\dots,X_n^{N,0,0,p,f,b}\right\}.$$

Let us introduce the value functions
$$ v^n(t/n,x)=\sup_{f_t\in A^N}\inf_{p_t\in \Delta}\sup_{b_t\in B}\dots\sup_{f_{n-1}\in A^N}\inf_{p_{n-1}\in \Delta}\sup_{b_{n-1}\in B} g(X_n^{t,x,p,f,b}),$$
$$ v^n(1,x)=g(x),$$
where $t=0,\dots,n-1$, $g(x)=\max\{x_1,\dots,x_n\}$.
From the dynamic programming theory it is known that $v^n$ satisfies the recurrence relations
\begin{equation} \label{eq:2}
v^n(t/n,x)=\sup_{f\in A^N}\inf_{p\in\Delta}\sup_{b\in B} v^n((t+1)/n,x+r(p,f,b)/\sqrt n),
\end{equation}
$t\le n-1$, $r=(r^1,\dots, r^N)$. We stress that we need not rigorously justify this and subsequent claims, since 
our goal is to formally construct prediction strategies. Their verification is delayed to the last step.

For a moment imagine that $v^n$ is a smooth function, satisfying (\ref{eq:2}) on $[0,1-1/n]\times\mathbb R^N$. Then, by Taylor's formula we get
\begin{align} \label{eq:3}
 0&=\sup_{f\in A^N}\inf_{p\in\Delta}\sup_{b\in B}\Bigl\{\sqrt{n} \langle v_x^n(t,x),r(p,f,b)\rangle+v_t^n(t,x)\nonumber\\
  &+\frac{1}{2}\langle v^n_{xx}(t,x) r(p,f,b),r(p,f,b)\rangle+o(1)\Bigr\},
\end{align} 
where $v^n_x$, $v^n_{xx}$ are the gradient vector and the Hessian matrix.

We will say that the loss function $l$ satisfies the \emph{Blackwell condition} if
\begin{equation} \label{eq:3A}
\Gamma(\gamma,f):=\{p\in\Delta:\langle\gamma,r(p,f,b)\rangle\le 0,\ b\in B\}\neq\emptyset
\end{equation} 
for all $(\gamma,f)\in\mathbb R^N_+\times A^N$.
Clearly, $\Gamma(0,f)=\Delta$. The Blackwell condition (\ref{eq:3A}) is satisfied if $l$ is convex in its first argument. In this case $p=\gamma/\langle \gamma,1\rangle\in\Gamma(\gamma,f)$ for $\gamma\in\mathbb R^N_+\backslash\{0\}$, since
$$ \sum_{i=1}^N\gamma^i r^i\left(\frac{\gamma}{\langle \gamma,1\rangle},f,b\right)=\langle \gamma,1\rangle l\left(\frac{\langle\gamma,f\rangle}{\langle \gamma,1\rangle},b\right)-\sum_{i=1}^N\gamma^i l(f^i,b)\le 0$$
by Jensen's inequality.

By the nature of $v^n$ these functions are non-decreasing in each $x_i$. Indeed, $v^n(t/n,x)$ is the optimal worst-case regret if the initial regret with respect to $i$-th expert at time moment $t$ equals to $x_i$. From (\ref{eq:3}) we get
$$ 0\le\sup_{f\in A^N}\inf_{p\in \Gamma(v_x^n,f)}\sup_{b\in B}\left\{v_t^n(t,x)+\frac{1}{2}\langle v^n_{xx}(t,x) r(p,f,b),r(p,f,b)\rangle+o(1)\right\}.$$
So, we expect that the limiting function $v$ satisfies the inequality
\begin{align} \label{eq:3B}
&-v_t(t,x)-G(v_x(t,x),v_{xx}(t,x))\le 0,\\
&G(\gamma,S)=\frac{1}{2}\sup_{f\in A^N}\inf_{p\in \Gamma(\gamma,f)}\sup_{b\in B}\langle S r(p,f,b),r(p,f,b)\rangle,\nonumber
\end{align}
and the boundary condition $v(1,x)=g(x)$.
Note that $G(\gamma,S)\ge G(\gamma,S')$, if the symmetric $N\times N$ matrix $S-S'$ is non-negative definite. Hence,
\begin{equation} \label{eq:4}
 -u_t(t,x)-G(u_x(t,x),u_{xx}(t,x))=0
\end{equation} 
is a fully non-linear parabolic equation (see \cite{CraIshLio92}). 
Along with (\ref{eq:4}) we consider the boundary condition
\begin{equation} \label{eq:5}
 u(1,x)=g(x)=\max\{x_1,\dots,x_n\}.
\end{equation}

The functions $v^n$ are defined on $Q^n=\{0,1/n,\dots,(n-1)/n,1\}\times \mathbb R^N.$
To describe their limiting behavior in a rigorous way, one can consider the Barles-Perthame half-relaxed (weak) upper limit:
\begin{align*}
\overline v(t,x)=\sup\{&\lim v^{n_k}(t_k,x_k):Q^{n_k}\ni (t_k,x_k)\to (t,x)\\
&\textrm{and}\ v^{n_k}(t_k,x_k)\ \textrm{converges}\}.
\end{align*}
From the results of \cite{BarPer88,BarSou91,KohSer10} and the above calculations we expect that $\overline v$ is a  \emph{viscosity subsolution} of (\ref{eq:4}), (\ref{eq:5}).
Note, that by the definition, 
\begin{equation} \label{eq:6}
\limsup_{n\to\infty}\frac{1}{\sqrt n}\mathscr R_n\le \overline v(0,0).
\end{equation}
 
\section{Smooth supersolutions and induced weighted average forecasting strategies}
\label{sec:3}
Take a smooth supersolution $w$ of (\ref{eq:4}), (\ref{eq:5}):
\begin{equation} \label{eq:6A}
 -w_t(t,x)-G(w_x(t,x),w_{xx}(t,x))\ge 0, \quad w(1,x)\ge g(x),
\end{equation} 
which is non-decreasing in each variable $x_i$. Assuming a comparison result: $\overline v\le w$, we conclude that the inequality (\ref{eq:6}) holds true for $w(0,0)$ instead of $\overline v(0,0)$. We also expect that a strategy $p_t(x)\in \Gamma(w_x(t,x),f)$ will produce the regret, satisfying this bound.

Let us look for supersolutions of the form $w(t,x)=c(1-t)+\Phi(x)$, where $c$ is a constant,
\begin{equation} \label{eq:7}
\Phi(x)\ge g(x),
\end{equation}
and $\Phi$ is non-decreasing in each variable. The differential inequality (\ref{eq:6A}) implies the condition
$G(\Phi_x(x),\Phi_{xx}(x))\le c$. This condition is satisfied if
\begin{equation} \label{eq:8}
 \frac{1}{2} \sup_{x\in\mathbb R^N}\langle\Phi_{xx}(x) h,h\rangle\le c,\quad |h^i|\le 1,\ i=1,\dots,N.
\end{equation}

Then by the Blackwell condition (\ref{eq:3A}) there exists a vector-function 
\begin{equation} \label{eq:9}
p^*(x,f)\in\Gamma(\Phi_x(x),f).
\end{equation}
If $l$ is convex in its first argument and $\Phi$ is strictly increasing in each variable, then, according to the remark after the formula (\ref{eq:3A}), one can take 
\begin{equation} \label{eq:10}
 p^*(x,f)=\frac{\Phi_x(x)}{\langle\Phi_x(x),1\rangle}\in \Gamma(\Phi_x(x),f).
\end{equation}

Consider the discrete-time state process (\ref{eq:1A}), generated by the prediction strategy, related to $p^*$:
\begin{equation} \label{eq:11}
 X^{*,i}_{s+1}=X_s^{*,i}+\frac{1}{\sqrt n} r^i(p_s^*,f_s,b_s),\quad p_s^*=p^*(X_s^*,f_s),\quad X^*_0=0.
\end{equation}
Note, that $p_t^*$ automatically satisfies the inequality 
\begin{equation} \label{eq:12}
\langle\Phi_x(X_t^*),r(p_t^*,f_t,b_t)\rangle\le 0
\end{equation}
which is also called the Blackwell condition: see \cite{CesLug03,CesLug06}.
For a convex function $a\mapsto l(a,b)$ from (\ref{eq:10}) we get a weighted average forecaster:
\begin{equation} \label{eq:12A}
a_t^*=\langle p_t^*,f_t\rangle=\frac{\langle \Phi_x(X_t^*),f_t\rangle}{\langle \Phi_x(X_t^*),1\rangle}.
\end{equation}

\begin{theorem} \label{th:1}
Let the Blackwell condition (\ref{eq:3A}) be satisfied, and let $\Phi:\mathbb R^N\mapsto\mathbb R$ be a twice continuously differentiable function, which non-decreases in each variable and meets the conditions (\ref{eq:7}), (\ref{eq:8}). Then a prediction strategy 
$$a_t^*=\langle p_t^*,f\rangle,\quad p_t^*=p^*(X^*_t,f_t),\quad t=0,\dots,n-1,$$ 
where $p^*$ satisfies (\ref{eq:9}) and $X^*$ is defined by (\ref{eq:11}), produces the regret, satisfying the inequality (\ref{eq:1}) with 
$C=c+\Phi(0).$ 
\end{theorem}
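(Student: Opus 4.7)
The plan is to run the verification argument already sketched formally in (\ref{eq:3})--(\ref{eq:3B}) at the discrete level, along the trajectory $(X_t^*)_{t=0}^n$ produced by $p^*$. Summing the increments in (\ref{eq:11}) gives $R_n/\sqrt n = g(X_n^*)$, so by (\ref{eq:7}) it suffices to establish the pointwise bound $\Phi(X_n^*) \le c + \Phi(0)$; then (\ref{eq:1}) with $C = c + \Phi(0)$ follows immediately.

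To obtain this bound I would Taylor-expand $\Phi$ one step at a time along the trajectory. Writing $\Delta_t := X_{t+1}^* - X_t^* = r(p_t^*,f_t,b_t)/\sqrt n$, Taylor's formula with Lagrange remainder yields
\begin{equation*}
\Phi(X_{t+1}^*) - \Phi(X_t^*) = \langle \Phi_x(X_t^*), \Delta_t\rangle + \tfrac12 \langle \Phi_{xx}(\xi_t)\Delta_t, \Delta_t\rangle
\end{equation*}
for some $\xi_t$ on the segment joining $X_t^*$ to $X_{t+1}^*$. The first term on the right is nonpositive by the Blackwell inequality (\ref{eq:12}), which is precisely what the selection $p_t^* \in \Gamma(\Phi_x(X_t^*),f_t)$ from (\ref{eq:9}) was engineered to guarantee; note that this selection is legitimate because the assumed monotonicity of $\Phi$ places $\Phi_x(X_t^*)$ in $\mathbb R^N_+$, where $\Gamma$ is nonempty by (\ref{eq:3A}). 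For the second term I would use that $l$ takes values in $[0,1]$, whence $|r^i| \le 1$, so (\ref{eq:8}) applied with $h = r$ gives $\tfrac12 \langle \Phi_{xx}(\xi_t)r, r\rangle \le c$. The net per-step increase of $\Phi$ is therefore at most $c/n$.

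Summing over $t = 0, \dots, n-1$ and telescoping produces $\Phi(X_n^*) - \Phi(0) \le c$, and then $g(X_n^*) \le \Phi(X_n^*) \le c + \Phi(0)$ via (\ref{eq:7}), which is the claim. I do not expect a genuine obstacle: the bound (\ref{eq:8}) is postulated uniformly over all $x \in \mathbb R^N$ and all $|h^i| \le 1$, so the intermediate point $\xi_t$ and the specific shape of $r$ cause no trouble. The whole argument is just the discrete shadow of the PDE verification estimate prefigured in Section~\ref{sec:2}; the real difficulty in the broader program is constructing a smooth $\Phi$ that satisfies (\ref{eq:7})--(\ref{eq:8}) with a small constant $c$, but that task lies outside the statement of the theorem.
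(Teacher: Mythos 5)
Your proof is correct and follows essentially the same route as the paper: a one-step Taylor expansion with Lagrange remainder, the Blackwell inequality (\ref{eq:12}) killing the first-order term, the uniform Hessian bound (\ref{eq:8}) together with $|r^i|\le 1$ controlling the remainder by $c/n$, and a telescoping sum yielding $g(X_n^*)\le\Phi(X_n^*)\le c+\Phi(0)$. The only cosmetic difference is that the paper packages the telescoping through the supersolution $w(t,x)=c(1-t)+\Phi(x)$, showing $w(t/n,X_t^*)$ is nonincreasing in $t$, whereas you track $\Phi$ directly and absorb the $c/n$ per step; these are identical computations.
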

\begin{proof}
For $w(t,x)=c(1-t)+\Phi(x)$ by Taylor's formula we get
\begin{align*}
w((t+1)/n,X^*_{t+1})&-w(t/n,X_t^*)=-c/n+\Phi(X^*_{t+1})-\Phi(X^*_t)\\
&=-c/n+\langle\Phi_x(X_t^*),r(p_t^*,f_t,b_t)\rangle/\sqrt n\\
&+\frac{1}{2}\langle\Phi_{xx}(\xi_t)r(p_t^*,f_t,b_t),r(p_t^*,f_t,b_t^*)\rangle/n\le 0,
\end{align*}
for some $\xi_t$, where the last inequality is implied by (\ref{eq:12}) and (\ref{eq:8}). Now the assertion of the theorem follows from the condition (\ref{eq:7}):
$$ R_n/\sqrt n=g(X_n^*)\le w(1,X_n^*)=\Phi(X_n^*)\le w(0,X_0^*)=c+\Phi(0).\qedhere$$
\end{proof}

Following \cite{CesLug03,CesLug06} we call $\Phi$ a potential function. The most natural smooth upper bound for $\max\{x_1,\dots,x_N\}$, and hence a candidate for a potential, is a soft-maximum function
\begin{equation} \label{eq:13}
 \Phi(x)=\frac{1}{\eta}\ln\left(\sum_{i=1}^N e^{\eta x_i}\right),\quad \eta>0.
\end{equation}  
This function is included in a more general class $\Phi(x)=\psi\left(\sum_{i=1}^N \phi(x_i)\right)$ considered in  \cite{CesLug03,CesLug06},
where $\psi$ and $\phi$ are assumed to be concave and convex respectively. The following inequality is also taken from  \cite{CesLug03,CesLug06}:
$$\langle\Phi_{xx}(x)h,h\rangle
\le \psi'\left(\sum_{k=1}^N\phi(x_k)\right)\sum_{i=1}^N\phi''(x_i) h_i^2.$$
For (\ref{eq:13}) we have $\psi(x)=\eta^{-1}\ln x$, $\phi(x)=e^{\eta x}$,
$$ \frac{1}{2}\langle\Phi_{xx}(x)h,h\rangle\le\frac{\eta}{2}\sum_{i=1}^N\frac{e^{\eta x_i}}{\sum_{k=1}^N e^{\eta x_k}}h_i^2\le  c=\frac{\eta}{2},\quad |h_i|\le 1.$$
For $p_t^*$ generated by (\ref{eq:13}), in accordance with Theorem \ref{th:1} we have
$$ \frac{R_n}{\sqrt n}\le c+\Phi(0)=\frac{\eta}{2}+\frac{\ln N}{\eta}=\sqrt{2\ln N}$$
for an ``optimal'' choice $\eta=\sqrt{2\ln N}$ (cf. \cite[Corollary 2.2]{CesLug06}). 
The formula (\ref{eq:12A}) reduces to
$$a_t^*=\frac{\langle \Phi_x(X_t^*),f_t\rangle}{\langle \Phi_x(X_t^*),1\rangle}
=\sum_{i=1}^N\frac{e^{\eta X_t^{*,i}}}{\sum_{j=1}^N e^{\eta X_t^{*,j}}}f^i_t
=\sum_{i=1}^N\frac{e^{-\eta L_t^i/\sqrt n}}{\sum_{j=1}^N e^{-\eta L_t^j/\sqrt n}}f^i_t,$$
where $L_t^i=\sum_{s=0}^{t-1} l(f^i_t,b_s)$ is the cumulative loss of $i$-th expert. This is a
basic version of the exponentially weighted average forecaster: see \cite[Chapter 2]{CesLug06}.

\section{Randomized prediction}
\label{sec:4}
Assume that the forecaster randomly chooses a prediction by taking a sample $I_t$ from a probability distribution $p_t=(p_t^1,\dots,p_t^N)$ over $\{y_1,\dots,y_N\}$. His cumulative loss is compared with the cumulative loss of a best fixed prediction: 
$$ \sum_{t=0}^{n-1} l(I_t,b_t)-\min_{1\le i\le N} \sum_{t=0}^{n-1} l(y_i,b),$$
and the regret is defined as the expectation of this quantity with respect to the induced artificial probability measure:
$$ R_n=\sum_{t=0}^{n-1} \sum_{j=1}^N p_t^j l(y_j,b_t)-\min_{1\le i\le N} \sum_{t=0}^{n-1} l(y_i,b_t)=\min_{1\le i\le N}\sum_{t=0}^{n-1} r^i(p_t,b_t),$$
$$r^i(p,b)=\sum_{j=1}^N p^j l(y_j,b)-l(y_i,b).$$
The game, where the forecaster knows the previous moves: $p_t=p_t(b_0,\dots,b_{t-1})$, and the adversary knows the prediction algorithm: $b_t=b_t(p_0,\dots,p_t)$ but not the predictions $I_t$ itself, corresponds to the case of an oblivious adversary: \cite[Chapter 2]{CesLug06}. However, the case of non-oblivious adversary is not interesting for the problem of this form: see \cite[Lemma 4.1]{CesLug06}. 

The described game is simpler than that considered above, since the ``experts'', corresponding to fixed predictions, do not play against the forecaster. Moreover, the condition (\ref{eq:3A}) is satisfied regardless of the convexity of $l$. Repeating the reasoning of Section \ref{sec:2}, we get the inequality (\ref{eq:3B}) with
$$G(\gamma,S)=\frac{1}{2}\inf_{p\in \Gamma(\gamma)}\sup_{b\in B}\langle S r(p,b),r(p,b)\rangle,$$
$$ \Gamma(\gamma)= \left\{p\in\Delta:\sum_{i=1}^N\gamma_i\sum_{j=1}^N p^j l(y_j,b)-\sum_{i=1}^N l(y_i,b)\le 0,\ b\in B\right\}.$$
So, a prediction strategy satisfying
$$ p_t^{i,*}=\frac{\Phi_{x^i}(X_t^*)}{\langle \Phi_x(X_t^*),1\rangle}\qquad \textrm{for\ \ } \Phi_x(X_t^*)\neq 0,$$
where $\Phi$ meets the conditions of Theorem 1, and $X_t^*$ is defined by the recursion of the form (\ref{eq:11}), produces the regret $R_n\le C/\sqrt n$. In particular, $C=\sqrt{2\ln N}$ for the exponentially weighted average forecaster, discussed after Theorem \ref{th:1}.

Finally, we note that the case of internal regret (see \cite[Section 4.4]{CesLug06}) can be considered in the same way.

\section{Acknowledgments}
The research is supported by the Russian Science Foundation, project 17-19-01038. 

\bibliographystyle{spmpsci}
\bibliography{litExperts}
\end{document}